\documentclass[letterpaper, 10 pt, conference]{ieeeconf}  % Comment this line out if you need a4paper
\IEEEoverridecommandlockouts
\overrideIEEEmargins
%\usepackage[margin=2cm]{geometry}
%\IEEEoverridecommandlockouts
% The preceding line is only needed to identify funding in the first footnote. If that is unneeded, please comment it out.
%\usepackage[english]{babel}
%\usepackage{amsmath, amsfonts, amsthm, amssymb}
%\usepackage{amsthm}
\usepackage{amssymb}
\usepackage{graphicx, tikz, subfig}
\usetikzlibrary{arrows.meta}
\usetikzlibrary{decorations.pathreplacing,angles,quotes}
\usetikzlibrary{shapes}
\usepackage{bm}
\usepackage{hyperref}
\usepackage{mathtools}
\usepackage{cuted} % for the strip command
\usepackage{nccmath}

\newcommand{\br}[1]{\left(#1\right)}
\newcommand{\sbr}[1]{\left[#1\right]}
\newcommand{\set}[1]{\left\{#1\right\}}

\newcommand{\vect}[1]{\mathbf{#1}}
\newcommand{\mtrx}[1]{\mathbf{#1}}

\def\transpose{^\intercal}

\def\Exp{\text{\normalfont Exp}}

\def\vec{\text{\normalfont vec}}

\newcommand{\argmin}{\arg\min}
\newcommand{\argmax}{\arg\max}

\newcommand{\defas}{\doteq}

% Double widetilde
\makeatletter
\newcommand{\doublewidetilde}[1]{{%
  \mathpalette\double@widetilde{#1}%
}}
\newcommand{\double@widetilde}[2]{%
  \sbox\z@{$\m@th#1\widetilde{#2}$}%
  \ht\z@=.9\ht\z@
  \widetilde{\box\z@}%
}
\makeatother

\DeclareFontFamily{U}{mathx}{\hyphenchar\font45}
\DeclareFontShape{U}{mathx}{m}{n}{
      <5> <6> <7> <8> <9> <10>
      <10.95> <12> <14.4> <17.28> <20.74> <24.88>
      mathx10
      }{}
\DeclareSymbolFont{mathx}{U}{mathx}{m}{n}
\DeclareFontSubstitution{U}{mathx}{m}{n}
\DeclareMathAccent{\widecheck}{0}{mathx}{"71}

% Accent

% Vectors

\def\vb{\vect{b}}

\def\ve{\vect{e}}
\def\vf{\vect{f}}

\def\vx{\vect{x}}
\def\vy{\vect{y}}
\def\vz{\vect{z}}

\def\vY{\vect{Y}}

\def\vvarpi{\bm{\varpi}}
\def\vvarphi{\bm{\varphi}}

\def\vmu{\bm{\mu}}

% Matrices
\def\mA{\mtrx{A}}

\def\mT{\mtrx{T}}

\def\midentity{\text{\normalfont{\textbf{I}d}}}
\def\idmtrx{\midentity}

\def\mSigma{\bm{\Sigma}}

\def\mOmega{\bm{\Omega}}

% Mathbb
\def\NN{\mathbb{N}}

\def\RR{\mathbb{R}}

\def\SS{\mathbb{S}}

% Mathcal

\def\calS{\mathcal{S}}

\def\calN{\mathcal{N}}

\def\calY{\mathcal{Y}}

\def\calH{\mathcal{H}}

\newtheorem{prop}{Proposition}[section]
\newtheorem{rmk}{Remark}[section]

%\usepackage{titlesec}
%\titleformat*{\section}{\large\bfseries}
%\titleformat*{\subsection}{\bfseries}

%\usepackage{showframe}

\usepackage{flushend}

%\usepackage{showframe}

%\usepackage{cite}
%\usepackage{textcomp}
%\usepackage{xcolor}
%\def\BibTeX{{\rm B\kern-.05em{\sc i\kern-.025em b}\kern-.08em
%    T\kern-.1667em\lower.7ex\hbox{E}\kern-.125emX}}
%\begin{document}

%\title{Using Non-Linear Vector Field to Reduce Over-Segmentation in Auto-Regressive Hidden Markov Models%\\
%%{\footnotesize \textsuperscript{*}Note: Sub-titles are not captured in Xplore and should not be used}
%    \thanks{This research has received funding from the European Research Council (ERC) under the European Union's Horizon 2020 research and innovation programme ARS (Autonomous Robotic Surgery) project, grant agreement No. 742671.}
%}
%
%\author{\IEEEauthorblockN{1\textsuperscript{st} Michele Ginesi}
%\IEEEauthorblockA{\textit{dept. of Computer Science} \\
%\textit{University of Verona}\\
%Verona, Italy \\
%michele.ginesi@univr.it}
%\and
%\IEEEauthorblockN{2\textsuperscript{nd} Paolo Fiorini}
%\IEEEauthorblockA{\textit{dept. of Computer Science} \\
%\textit{University of Verona}\\
%Verona, Italy \\
%paolo.fiorini@univr.it}
%}

%\title{Using Non-Linear Vector Field to Reduce Over-Segmentation in Auto-Regressive Hidden Markov Models}

\begin{document}

\title{\LARGE\bf
Generalization of Auto-Regressive Hidden Markov Models to Non-Linear Dynamics and Unit Quaternion Observation Space
}
\author{Michele Ginesi$^{1}$ and Paolo Fiorini$^{2}$
\thanks{
    *This project has received funding from the European Research Council (ERC) under the European Union’s Horizon 2020 research and innovation programme, ARS (Autonomous Robotic Surgery) project, grant agreement No. 742671.
}
\thanks{
    $^{1}$ Department of Computer Science, University of Verona, Strada le Grazie 15, Verona, Italy
    {\tt\small michele.ginesi@univr.it}
    }%
\thanks{
    $^{2}$ Department of Computer Science, University of Verona, Strada le Grazie 15, Verona, Italy
    {\tt\small paolo.fiorini@univr.it}
    }%
}

\maketitle

\thispagestyle{empty}
\pagestyle{empty}
\begin{abstract}
    Latent variable models are widely used to perform unsupervised segmentation of time series in different context such as robotics, speech recognition, and economics.
    One of the most widely used latent variable model is the Auto-Regressive Hidden Markov Model (ARHMM), which combines a latent mode governed by a Markov chain dynamics with a linear Auto-Regressive dynamics of the observed state.

    \noindent
    In this work, we propose two generalizations of the ARHMM.
    First, we propose a more general AR dynamics in Cartesian space, described as a linear combination of non-linear basis functions.
    Second, we propose a linear dynamics in unit quaternion space, in order to properly describe orientations.
    These extensions allow to describe more complex dynamics of the observed state.

    Although this extension is proposed for the ARHMM, it can be easily extended to other latent variable models with AR dynamics in the observed space, such as Auto-Regressive Hidden semi-Markov Models.
\end{abstract}

\section{INTRODUCTION}

\emph{Hidden Markov Models} (HMMs) \cite{CDSCB10, Vis11, VMIJ12} are a type of graphical model widely used in speech recognition \cite{JR85, Jel97}, hand-writing recognition \cite{NF86}, natural language modeling \cite{MS99a}, and to segment kinematics in the context of \emph{Minimally Invasive Surgery} \cite{RLVVKYH08, VRLKH09, TEKHV12}.
The model consists of a Markov chain governing the evolution of an \emph{hidden} (or \emph{latent}) \emph{mode}.
At each time $t$, the hidden mode $z_t$ emits an observation $\vy_t$ with a certain probability distribution $p( \mathbf{y} _t | z_t)$.

A well-known generalization of HMMs is the \emph{Auto-Regressive} HMM (ARHMM) \cite{EMJ89, JR85, Rab89, Mur98, ER05, KVNP14, GRW16}, in which the observation $\vy_t$ at time $t$ is given by the hidden mode $z_t$ and the previous observed state $\vy_{t-1}$ via a linear Auto-Regressive dynamic.
This means that the current latent mode $z_t$ does not emit an observation, but instead it describes the (linear) vector field governing the evolution of the observed state.

\noindent
This type of model, and its generalizations, have been successfully used to temporally segment robot kinematics.
For instance, in \cite{KVNP14} a \emph{state-based transition} ARHMM (STARHMM) was developed.
This model adds a conditional dependence between the current observed state and the next hidden mode.
Similarly, in \cite{KDNVP15} this model was improved to learn skills by creating a primitive library.

In the literature, most of the improvements to ARHMM have been implemented in the definition of the latent or observation space (e.g. by adding hidden variables \cite{CP10, KDNVP15}), by adding conditional probabilities relation between modes (e.g. by making the next hidden mode dependent on the current observed state \cite{KVNP14, KDNVP15}), or by increasing the number of previous states taken into account in the definition of the AR dynamics.
That is, most of the generalizations of the ARHMM have focused on the \emph{topology} of the graphical model.

On the other hand, at the best of the authors' knowledge, no generalization of the definition of dynamics of the observed state exists in the literature.
For this reason, in this work, we propose to generalize the ARHMM to allow Non-Linear AR dynamics in Cartesian space and linear dynamics in Unit Quaternion space.
The main advantage of our proposed improvement lies in the fact that it can be straightforwardly adapted to work in combination with the modification of the topology of the latent variable model mentioned above.

The paper is structured as follows.
In Section~\ref{sec:arhmm} we present the theory behind ARHMM, recalling the \emph{Expectation Maximization} (EM) algorithm used to infer the set of model parameters from data.
In Section~\ref{sec:nlarhmm} we present our proposed modification to the ARHMM model and the modification to the EM algorithm to infer the model's parameter.
In Section~\ref{sec:experiments} we propose the experiments validating our proposed approach.
In Section~\ref{sec:conclusion} we present the conclusion of the work and possible future extensions.

\section{AUTO REGRESSIVE HIDDEN MARKOV MODELS}
\label{sec:arhmm}

In this section, we provide a formal definition of the Auto-Regressive Hidden Markov Model.

In details, an ARHMM is a model defined as follows:
\[ \calH \defas \set{ \calS, \calY, \Theta = \set{ \vvarpi, \mT, \set{ \mA_s, \vb_s , \bm{\Sigma}_s }_{s\in\calS} } } . \]
$ \calS = \set{ 1, 2, \ldots, S } $ is the set of hidden modes, and the mode at time $t\in{1,2,\ldots,T}$ is denoted by $z_t$.
$\calY = \RR^d$ is the observation space, and the observed state at time $t$ is denoted by $ \mathbf{y}_t \in \mathcal{Y} $.
$ \Theta $ is the set of model parameters.
Vector $\vvarpi = [\varpi_i]_{i\in\calS} $ defines the initial mode probability such that
\( \varpi_i = \Pr(z_1 = i) , \)
for \( i\in\calS \).
Matrix $\mT = [ \texttt{t}_{i\,j}]_{i\in\calS}^{j\in\calS}$ is the transition probabilities matrix between hidden modes such that
\( \texttt{t}_{i\,j} = \Pr(z_{t+1} = j | z_t = i) . \)
The emissions are modeled by the following Auto-Regressive (AR) dynamics with Gaussian white noise:
\begin{equation}
    \vy_t | z_t = s , \, \vy_{t-1} \sim \calN \br{ \vy_t | \mA_s \vy_{t-1} + \vb_s , \mSigma_s} .
    \label{eq:ar_dyn}
\end{equation}

% \begin{itemize}
%     \item $ \calS = \set{ 1, 2, \ldots, S } $ is the set of hidden modes. The mode at time $t\in{1,2,\ldots,T}$ is denoted by $z_t$;
%     \item vector $\vvarpi = [\varpi_i]_{i\in\calS} $ defines the initial mode probability such that
%     \( \varpi_i = \Pr(z_1 = i), i\in\calS ; \)
%     \item matrix $\mT = [ \texttt{t}_{i\,j}]_{i\in\calS}^{j\in\calS}$ is the transition probabilities matrix between hidden modes such that
%     \( \texttt{t}_{i\,j} = \Pr(z_{t+1} = j | z_t = i) ; \)
%     \item $\calY = \RR^d$ is the observation space. The observation at time $t$ is denoted by $\vy_t \in \RR^d $. The emissions are modeled by the following Auto-Regressive (AR) dynamics with Gaussian white noise:
%     \begin{equation}
%         \vy_t | z_t = i , \vy_{t-1} \sim \calN \br{ \vy_t | \mA_i \vy_{t-1} + \vb_i } .
%         \label{eq:ar_dyn}
%     \end{equation}
% \end{itemize}

The graphical representation of the model is given in Figure~\ref{fig:arhmm}.
In this representation, a grey dot represents an observed variable, while a white dot denotes a hidden variable.
An arrow between two nodes $a$ and $b$ means that the probability of $b$ depends on the value of $a$.
For a refresh on graphical model representation, we suggest \cite{Bis06}.

Two main algorithms characterize the ARHMM (and the latent-variable models in general): the \emph{Expectation Maximization} (EM) algorithm and the \emph{Viterbi} algorithm.
The former is used to infer the set of parameters $\Theta$ by maximizing the likelihood of the observed data
\( p (\vY | \Theta). \)
The latter, instead, allows extracting the sequence of latent mode $\hat\vz$ that maximize the joint probability distribution $ p (\vY, \vz | \Theta) $ for a given observed sequence $ \vY. $
In short, the EM algorithm allows to learn the model parameters from the data, while the Viterbi algorithm allows to segment an observed sequence once the model parameters are known.

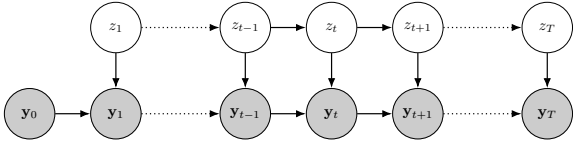
\begin{figure}[t]
    \centering
    \vspace{5pt} %
    \resizebox*{0.9\linewidth}{!}{
        \begin{tikzpicture}
    \node[draw, circle, minimum size = 1.17cm] (z0) at (0,0) {$ z_1 $};
    \node[draw, circle, minimum size = 1.17cm] (z-) at (3,0) {$ z_{t-1} $};
    \node[draw, circle, minimum size = 1.17cm] (z) at (5,0) {$ z_{t} $};
    \node[draw, circle, minimum size = 1.17cm] (z+) at (7,0) {$ z_{t+1} $};
    \node[draw, circle, minimum size = 1.17cm] (zT) at (10,0) {$ z_{T} $};
    \node[draw, circle, minimum size = 1.17cm, fill = black, opacity = 0.2] at (-2,-2) {$ \vy_0 $};
    \node[draw, circle, minimum size = 1.17cm, fill = black, opacity = 0.2] at (0,-2) {$ \vy_1 $};
    \node[draw, circle, minimum size = 1.17cm, fill = black, opacity = 0.2] at (3,-2) {$ \vy_{t-1} $};
    \node[draw, circle, minimum size = 1.17cm, fill = black, opacity = 0.2]at (5,-2) {$ \vy_{t} $};
    \node[draw, circle, minimum size = 1.17cm, fill = black, opacity = 0.2] at (7,-2) {$ \vy_{t+1} $};
    \node[draw, circle, minimum size = 1.17cm, fill = black, opacity = 0.2] at (10,-2) {$ \vy_{T} $};
    \node[draw, circle, minimum size = 1.17cm] (ynot) at (-2,-2) {$ \vy_0 $};
    \node[draw, circle, minimum size = 1.17cm] (y0) at (0,-2) {$ \vy_1 $};
    \node[draw, circle, minimum size = 1.17cm] (y-) at (3,-2) {$ \vy_{t-1} $};
    \node[draw, circle, minimum size = 1.17cm] (y) at (5,-2) {$ \vy_{t} $};
    \node[draw, circle, minimum size = 1.17cm] (y+) at (7,-2) {$ \vy_{t+1} $};
    \node[draw, circle, minimum size = 1.17cm] (yT) at (10,-2) {$ \vy_{T} $};

    \draw[thick, -{Latex}] (ynot) -- (y0);
    \draw[thick, -{Latex}] (z0) -- (y0);
    \draw[thick, -{Latex}] (z-) -- (y-);
    \draw[thick, -{Latex}] (z) -- (y);
    \draw[thick, -{Latex}] (z+) -- (y+);
    \draw[thick, -{Latex}] (zT) -- (yT);
    \draw[thick, -{Latex}, dotted] (z0) -- (z-);
    \draw[thick, -{Latex}] (z-) -- (z);
    \draw[thick, -{Latex}] (z) -- (z+);
    \draw[thick, -{Latex}, dotted] (z+) -- (zT);
    \draw[thick, -{Latex}, dotted] (y0) -- (y-);
    \draw[thick, -{Latex}] (y-) -- (y);
    \draw[thick, -{Latex}] (y) -- (y+);
    \draw[thick, -{Latex}, dotted] (y+) -- (yT);
\end{tikzpicture}
    }
    \caption{Graphical representation of ARHMM.}
    \label{fig:arhmm}
\end{figure}

\subsection{EM Algorithm in brief}

The EM algorithm consists of an iterative procedure that, at each iteration, maximizes, w.r.t. $\Theta$, the auxiliary function
\begin{equation}
    \medmath{
    \begin{multlined}
        Q (\Theta, \Theta^\text{old}) \defas Q_\text{init} (\Theta, \Theta^\text{old}) + Q_\text{trans} (\Theta, \Theta^\text{old}) +
        Q_\text{out} (\Theta, \Theta^\text{old})
    \end{multlined}
    }
    \label{eq:aux_fun}
\end{equation}
where $ \Theta ^ \text{old} $ is the current guess of the set of parameters, and the three terms of the sum are defined as
\begin{subequations}
    \begin{align}
        &
    \medmath{
        \begin{multlined}
            Q_\text{init} (\Theta, \Theta^\text{old}) \defas
            \sum_{z_1\in\calS} \log p(z_1 | \Theta) p (z_1 | \vY , \Theta ^ \text{old}) ,
        \end{multlined}
    }
        \label{eq:aux_init}\\
        &
        \medmath{
        \begin{multlined}
            Q_\text{trans} (\Theta, \Theta^\text{old}) \defas
            \sum_{t=1}^{T-1} \sum_{z_t, z_{t+1} \in \calS} \log p (z_{t+1} | z_t , \Theta)
            p (z_t, z_{t+1}|\vY, \Theta ^ \text{old}) ,
        \end{multlined}
    }
        \label{eq:aux_trans}\\
        &
        \medmath{
        \begin{multlined}
            Q_\text{out} (\Theta, \Theta^\text{old}) \defas
            \sum_{t=0}^{T-1} \sum_{z_{t+1} \in \calS} \log p (\vy_{t+1} | z_{t+1}, \vy_t, \Theta)
            p (z_{t+1} | \vY , \Theta^\text{old}) .
        \end{multlined}
    }
        \label{eq:aux_out}
    \end{align}
\end{subequations}

During the \emph{Expectation Step}, the quantities that depend on $\theta^\text{old}$, that is $p(z_t, z_{t+1} | \vY , \Theta^\text{old})$ and $p(z_{t+1} | \vY , \Theta^\text{old})$, are computed using the \emph{forward-backward algorithm} \cite{Bis06}.
During the \emph{Maximization Step}, the set of parameters $\Theta$ that maximizes $Q_\text{init}$, $Q_\text{trans}$ and $Q_\text{out}$ is computed.
These two steps are repeated until convergence.
Convergence is reached when the changes in the log-likelihood
\( \log p ( \vY , \mathbf{z} ) \)
is below a given tolerance.

\section{GENERALIZATION OF ARHMMS TO DIFFERENT DYNAMICS}
\label{sec:nlarhmm}

In this section, we present our proposed generalizations of the AR dynamic.
In particular, in Section~\ref{subsec:nlarhmm} we discuss our proposed non-linear dynamics in Cartesian space; while in Section~\ref{subsec:uqarhmm} we present our proposed linear dynamics in Unit Quaternion space.

\subsection{Cartesian Non-Linear ARHMM}
\label{subsec:nlarhmm}
When generalizing the AR dynamics in Cartesian space, we aim at modifying the linear AR dynamics shown in \eqref{eq:ar_dyn} with a more general dynamic
\(\vf_s : \RR ^ d \to \RR ^ d\)
for each hidden mode $ s \in \calS .$
To do so, we will use a \emph{basis functions}-based formulation so that the non-linear function $ \vf_s $ is written as a linear combination of non-linear basis functions
\(\set{ \varphi_n:\RR^d \to \RR }_{n=0,1,\ldots, N} : \)
\begin{equation}
    \vf _ s (\vy_t) |_i = f_i^{(s)} (\vy_t) = \sum_{j=0}^N \omega_{i\,j}^{(s)} \varphi_j (\vy_t),
    \label{eq:basis_function_formulation_scalar}
\end{equation}
where $ \omega_{i\,j} ^ {(s)} $ is the set of \emph{weights} that describes the vector field.
By defining the \emph{weight matrix}
\( \bm{\Omega}_s \defas [ \omega_{i\,j}^{(s)} ]_{i = 1,2,\ldots, d} ^ {j = 0,1,\ldots, N}\)
and the \emph{non-linear map}
\( \vvarphi(\vy) \defas [\varphi_j (\vy)]_{j=0,1,\ldots,N} , \)
we have that formulation \eqref{eq:basis_function_formulation_scalar} can be written in matrix-vector notation as
\begin{equation}
    \vf_s(\vy_t) = \mOmega_s \vvarphi(\vy_t) .
    \label{eq:basis_function_formulation_vector}
\end{equation}

\noindent
Thus, \eqref{eq:ar_dyn} now reads
\begin{equation}
    \vy_t | z_t = s , \, \vy_{t-1} \sim \calN \br{ \vy_t | \mOmega_s \vvarphi \br{\vy_{t-1}} , \mSigma_s} .
    \label{eq:nlar_dyn}
\end{equation}

We remark that, since only the formulation of the AR dynamics changes from the classical ARHMM to the non-linear case, the graphical model representation is the same as in Figure~\ref{fig:arhmm}.

During the \emph{Expectation} step, the quantities that depend on the current guess of the set $\Theta^\text{old}$ of parameters have to be computed.
Since the conditional independence properties of the model depend exclusively on its graphical representation \cite{Bis06}, we have that the E-step for the \emph{Non-Linear ARHMM} (NL-ARHMM) is identical to the linear case.
The only difference is that the emission probability
\( p (\vy_t | z_t , \vy_{t-1}) \)
is given by formula \eqref{eq:nlar_dyn} instead of \eqref{eq:ar_dyn}.

Similarly, in the \emph{Maximization} step, maximization of \eqref{eq:aux_init} and \eqref{eq:aux_trans} is achieved in the same way as the linear ARHMM case, since there is no dependence on the emission probability.
Thus, the only difference in the EM algorithm between a linear ARHMM and the NL-ARHMM lies in the maximization of quantity $ Q_\text{out} .$

\begin{prop}
    \label{prop:nl_max}
    Denote by $ \gamma(t) $ the quantity $ p(z_t | \vY, \Theta) $ so that
    \( \gamma_s (t) \defas \Pr( z_t = s | \vY, \Theta ^ \text{\normalfont old} ) .\)
    Then, maximization of $ Q_\text{out} (\Theta, \Theta ^ \text{\normalfont old}) $ in \eqref{eq:aux_out} is achieved by setting
    \begin{subequations}
        \label{eqs:out_max}
        \begin{equation}
        \medmath{
             \mOmega_s =
             \!\! %ugly but necessary
             \begin{multlined}[t]
                 \br{ \sum_{t=0}^{T-1} \gamma_s(t+1) \vy_{t+1} \vvarphi (\vy_t)\transpose }
                 \!\! % ugly but necessary
                    \br{ \sum_{t=0}^{T-1} \gamma_s(t+1) \vvarphi(\vy_t) \vvarphi(\vy_t)\transpose } ^ {-1}
                \end{multlined}
             \label{eq:out_max_sol_omega}
        }
        \end{equation}
    and
        \begin{equation}
             \mSigma_s =
             \frac{\sum_{t=0}^{T-1} \gamma_s(t+1) \mathbf{e}^{(s)}_t \mathbf{e}^{(s)}_t {} ^ \intercal  }{\sum_{t=0}^{T} \gamma_s (t+1) }
                 ;
             \label{eq:out_max_sol_sigma}
         \end{equation}
    \end{subequations}
     where the vector $ \mathbf{e}_t^{(s)} \in \mathbb{R}^d $ is defined as
     \[ \mathbf{e}_t ^{(s)} = \mathbf{y}_{t+1} - \bm{\Omega}_s \vvarphi ( \mathbf{y}_t). \]
\end{prop}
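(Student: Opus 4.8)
The plan is to write $Q_\text{out}$ out explicitly using the Gaussian emission density \eqref{eq:nlar_dyn}, recognise the resulting expression as a sum over the hidden modes of weighted Gaussian linear-regression log-likelihoods, and maximise each such term in closed form by elementary matrix calculus. Substituting \eqref{eq:nlar_dyn} into \eqref{eq:aux_out} and writing $p(z_{t+1}=s\mid\vY,\Theta^\text{old})=\gamma_s(t+1)$, one gets
\[
Q_\text{out}(\Theta,\Theta^\text{old}) = \sum_{s\in\calS}\sum_{t=0}^{T-1}\gamma_s(t+1)\sbr{-\tfrac{d}{2}\log(2\pi) - \tfrac{1}{2}\log\det\mSigma_s - \tfrac{1}{2}\,\mathbf{e}_t^{(s)\transpose}\mSigma_s^{-1}\mathbf{e}_t^{(s)}},
\]
with $\mathbf{e}_t^{(s)}=\vy_{t+1}-\mOmega_s\vvarphi(\vy_t)$ as defined in the statement. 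The summand indexed by $s$ depends only on the pair $(\mOmega_s,\mSigma_s)$, so the maximisation decouples into $S$ independent problems, and it suffices to treat a single fixed $s$.

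First I would optimise over $\mOmega_s$ with $\mSigma_s\succ\mzero$ held fixed. Only the quadratic term depends on $\mOmega_s$, and since the weights $\gamma_s(t+1)$ are nonnegative and $\mSigma_s^{-1}$ is positive definite it is a concave function of $\mOmega_s$, so any stationary point is a global maximiser. Using the matrix-derivative identity $\frac{\partial}{\partial\mOmega_s}\,\mathbf{e}_t^{(s)\transpose}\mSigma_s^{-1}\mathbf{e}_t^{(s)} = -2\,\mSigma_s^{-1}\mathbf{e}_t^{(s)}\vvarphi(\vy_t)\transpose$, the stationarity condition reads
\[
\mSigma_s^{-1}\sum_{t=0}^{T-1}\gamma_s(t+1)\br{\vy_{t+1}-\mOmega_s\vvarphi(\vy_t)}\vvarphi(\vy_t)\transpose = \mzero .
\]
Left-multiplying by $\mSigma_s$ eliminates all dependence on $\mSigma_s$; rearranging and inverting the weighted Gram matrix $\sum_t\gamma_s(t+1)\vvarphi(\vy_t)\vvarphi(\vy_t)\transpose$ — which I would assume nonsingular, the natural nondegeneracy condition on the basis functions along the data — yields \eqref{eq:out_max_sol_omega}. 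Crucially, the optimal $\mOmega_s$ does not depend on $\mSigma_s$, which is what licenses optimising the two parameters sequentially.

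Finally, substituting the residuals $\mathbf{e}_t^{(s)}$ at this $\mOmega_s$ and writing $\mathbf{e}_t^{(s)\transpose}\mSigma_s^{-1}\mathbf{e}_t^{(s)} = \tr(\mSigma_s^{-1}\mathbf{e}_t^{(s)}\mathbf{e}_t^{(s)\transpose})$, set $G_s=\sum_t\gamma_s(t+1)$ and $\mathbf{S}_s=\sum_t\gamma_s(t+1)\mathbf{e}_t^{(s)}\mathbf{e}_t^{(s)\transpose}$, so the $\mSigma_s$-dependent part of $Q_\text{out}$ becomes $-\tfrac{1}{2}\br{G_s\log\det\mSigma_s + \tr(\mSigma_s^{-1}\mathbf{S}_s)}$, the classical Gaussian covariance objective. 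Differentiating with respect to the precision $\mSigma_s^{-1}$ (using $\partial\log\det\mSigma_s^{-1}=\mSigma_s$ and $\partial\tr(\mSigma_s^{-1}\mathbf{S}_s)=\mathbf{S}_s$) gives $G_s\mSigma_s=\mathbf{S}_s$, i.e. \eqref{eq:out_max_sol_sigma}; that this critical point is a genuine maximiser follows from the elementary inequality $\log\det\mathbf{B}-\tr\mathbf{B}+d\le 0$ for positive-definite $\mathbf{B}$, with equality iff $\mathbf{B}=\idmtrx$, applied to $\mathbf{B}=G_s^{-1}\mSigma_s^{-1}\mathbf{S}_s$ (which has positive eigenvalues, being similar to a positive-definite matrix). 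The whole argument is a routine weighted maximum-likelihood computation; the only points that demand care are the matrix-derivative identities, checking the concavity/inequality so the stationary points are true maxima, the nonsingularity hypothesis on the weighted feature Gram matrix, and the observation that $\mOmega_s$ decouples from $\mSigma_s$, so that the two formulas in \eqref{eqs:out_max} are in fact jointly optimal.
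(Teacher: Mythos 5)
Your proof is correct and takes essentially the same route as the paper: decouple $Q_\text{out}$ over the hidden modes, set the gradient with respect to $\mOmega_s$ to zero, and set the gradient with respect to the precision $\mSigma_s^{-1}$ to zero. The refinements you add --- concavity of the $\mOmega_s$ problem, the nonsingularity hypothesis on the weighted Gram matrix, the $\log\det$ inequality certifying the maximum, and the observation that the optimal $\mOmega_s$ does not depend on $\mSigma_s$ (so the residuals in \eqref{eq:out_max_sol_sigma} can be taken at the updated $\mOmega_s$, giving the joint maximizer) --- only strengthen the argument relative to the paper, which simply zeroes the two gradients and, in a remark after the proof, evaluates the residuals at the old $\mOmega_s$.
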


\begin{proof}
    We aim at maximizing (by dropping $\Theta$ for notation simplicity) the quantity
    \begin{equation*}
        \widetilde{Q}(\Theta, \Theta^\text{old}) = \sum_{t=0}^{T-1} \sum_{s=1}^S \gamma_s(t+1) \log p (\vy_{t+1} | z_{t+1} = s, \vy_t) .
    \end{equation*}
    To maximize this quantity, we can maximize over each hidden mode $ s\in\calS $ independently:
    \begin{equation*}
        \begin{multlined}
                {\mOmega_s^\star, \mSigma_s^\star} =
                    \argmax_{\mOmega_s, \mSigma_s} \sum_{t=0}^{T-1} \gamma_s(t+1)
                    \log p (\vy_{t+1} | z_{t+1} = s, \vy_t).
        \end{multlined}
    \end{equation*}
    Since the emission probability is given by \eqref{eq:nlar_dyn}, the function to maximize reads
    \begin{equation}
        \label{eq:to_max_out_nl}
        \medmath{
            \begin{multlined}[t]
            \widehat{Q}_\text{out} =
            -\frac{1}{2}\sum_{t=0}^{T-1} \gamma_s (t+1) \log |\mSigma_s| \\
            -\frac{1}{2} \sum_{t=0}^{T-1}\gamma_s(t+1) \br{ \vy_{t+1} - \mOmega_s \varphi(\vy_t) }\transpose \mSigma_s^{-1}
                \br{ \vy_{t+1} - \mOmega_s \varphi(\vy_t) }.
        \end{multlined}
    }
    \end{equation}
    We start by discussing the maximization over the weight matrix $\mOmega_s$.
    The gradient of $\widehat{Q}_\text{out}$ w.r.t. $ \bm{\Omega}_s$ is
    \begin{equation*}
        \begin{multlined}
            \nabla_{\mOmega_s} \widehat{Q} = \mSigma_s^{-1} \Bigg(
                \sum_{t=0}^{T-1} \gamma_s(t+1) \vy_{t+1} \vvarphi(\vy)\transpose \hspace{2cm}\\
                -
                \mOmega_s \sum_{t=0}^{T-1} \gamma_s(t+1) \vvarphi(\vy_t) \vvarphi(\vy_t)\transpose
            \Bigg).
        \end{multlined}
    \end{equation*}
    By setting it to zero, we get formula \eqref{eq:out_max_sol_omega}.

    We now discuss the maximization over the weight matrix $\mSigma_s$.
    The gradient of $\widehat{Q}_\text{out}$ w.r.t. $ \bm{\Sigma}_s$ is
    \begin{equation*}
        \nabla_{\mSigma_s ^ {-1}} \widehat{Q} =
        \begin{multlined}[t]
            \frac{1}{2} \mSigma_s \sum_{t=0}^{T-1}\gamma_s(t+1)
            -\frac{1}{2} \sum_{t=0}^{T-1}
            \gamma_s(t+1) \\
            \br{\vy_{t+1} - \mOmega_s \vvarphi(\vy_t)} \br{\vy_{t+1} - \mOmega_s \vvarphi(\vy_t)}\transpose,
        \end{multlined}
    \end{equation*}
    which, by setting it to zero, gives formula \eqref{eq:out_max_sol_sigma}.

\end{proof}

We remark that the matrix $\mOmega_s$ in \eqref{eq:out_max_sol_sigma} is the ``old'' guess, and not that given by the update formula \eqref{eq:out_max_sol_omega}.
Thus, when implementing the EM-algorithm, the update of the covariance matrices $ \mSigma_s $ should be performed before the update of the weight matrices $ \mOmega _ s$.

\subsubsection{Example of Basis Functions}

In this section, we present some examples of basis functions.

The first set shows how to interpret the classical, linear ARHMM as a particular case of NL-ARHMM.
Indeed, by defining
\( \vvarphi^\text{(lin)}(\vy) \defas \sbr{ 1, y_1, y_2, \ldots, y_d } \transpose , \)
we have that the resulting weight matrix is a block matrix with the offset $\vb$ and the linear map $\mA$ in \eqref{eq:ar_dyn} as
\( \mOmega = \sbr{ \vb , \mA } . \)

The second well known set of basis functions is the family of \emph{Gaussian Radial Basis Functions} (GRBFs).
Given a set $\{ \vmu_i \}_{i=1,2,\ldots,N}$ of centers and a set $\{ \bm{\varSigma}_i \}_{i=1,2,\ldots,N}$ of covariance matrices, we define the basis functions as
\begin{align*}
    \varphi^{\text{(grbf)}}_0(\vy) & \defas 1 , \\
    \varphi^{\text{(grbf)}}_i(\vy) & \defas
        \begin{multlined}[t]
            \exp\br{ - (\vy - \vmu_i) \bm{\varSigma}_i^{-1} (\vy - \vmu_i) },
            i=1,\ldots,N.
        \end{multlined}
\end{align*}
Usually, the covariance matrices are set to be a multiple of the identity matrix
\( \bm{\varSigma}_i = \varsigma_i \idmtrx_d . \)
The main drawback of this family of basis functions lies in the fact that, even for small values of the dimension $d$ of the observed space, a high number of basis functions is needed to `cover' the space.
Indeed, this family of basis functions is usually adopted when learning functions in bounded domains (e.g. in \emph{Dynamic Movement Primitives}, where the time domain is fixed \cite{INS03}).
For this reason, we suggest to use this set of basis functions when dealing with bounded observation spaces, such as $d-$dimensional cubes $[0,1]^d$.

Finally, a third family of basis functions is the set of polynomial functions up to a degree $k$:
\begin{equation}
    \vvarphi^{ (\mathcal{P}_k) } (\vy) \defas \text{col}\br{\set{ \prod _ {i = 1} ^ {d} y_i ^ { c_i } : c_i \in \NN,\, \sum_{i=1}^d c_i \le k }} .
    \label{eq:poly_bfs_def}
\end{equation}
For instance, assume that $d=2$ ($\vy \in \mathcal{Y} = \RR^2$) and $k=3$.
Then the basis functions are
\begin{equation*}
    \vvarphi^{(\mathcal{P}_3)}(\vy) = \sbr{
        1,y_1,y_2,y_1^2, y_1y_2, y_2^2,
        y_1^3, y_1^2y_2, y_1y_2^2, y_2^3
    }\transpose.
\end{equation*}
As we will show in Section~\ref{sec:poly_test}, this type of basis functions is particularly useful in low-dimension spaces, where linear dynamics are not able to describe complex evolutions of the observed state.
On the other hand, they are less useful in higher dimensional spaces for two main reasons:
firstly, linear dynamics are able to describe more complex evolutions;
secondly, non-linearity in higher dimensional spaces require a huge amount of parameters (i.e. the number of columns of $\mOmega$ increases) risking over-fitting and numerical inefficiency.

\subsection{Unit-Quaternion Linear ARHMM}
\label{subsec:uqarhmm}

Orientations can be modeled in different ways.
There are two preferred spaces to describe orientations: the space $SO(3)$ of \emph{orthogonal $3\times 3$ matrices}, and the space $\mathbb{S}^3$ of unit quaternions.
Both representations are singularity-free.
However, unit quaternions are preferred since they require only four variable to be described, instead of the nine parameters of $3 \times 3$ matrices.

The main difficulty when dealing with dynamics in $\SS^3$ lies in the fact that the resulting quaternion must still be of unitary norm.
To solve this problem, we take advantage of two properties of quaternions.
The first is the fact that the exponential of a quaternion with null real part always results in a quaternion of unit norm:
\( \left\Vert \Exp(0 + a \texttt{i} + b\texttt{j} + c \texttt{k}) \right\Vert = 1,\quad \forall a,b,c,\in \mathbb{R} . \)
The second is the fact that the norm of the product between two quaternion $p $ and $q$ is the product of the norms:
\( \left\Vert p * q \right\Vert = \left\Vert p \right\Vert\, \left\Vert q \right\Vert .  \)
Thus, we propose to describe a dynamics in $\SS^3$ by 3 parameters $a,b,c\in \mathbb{R} $ such that
\[ q_{t+1} = \Exp (a \texttt{i} + b\texttt{j} + c \texttt{k}) * q_t.  \]
From here, we will denote as `$\vec$' the function that maps a quaternion to its vector formulation:
\begin{equation*}
\medmath{
     \vec : \SS^3 \to \mathbb{R}^4 , \qquad
     q = q_r + q_i\texttt{i} + q_j \texttt{j} + q_k\texttt{k} \mapsto \mathbf{q} = [q_r, q_i, q_j, q_k] ^ \intercal.
}
\end{equation*}
With this notation, we have that in this model the probability of the next observation reads
\begin{multline}
    \label{eq:emission_prob_quat}
    q_t | z_t = s , q_{t-1} \sim \\
    \mathcal{N} \big( \vec(q_t) | \vec\left(\Exp ( a_s \texttt{i} + b_s \texttt{j} + c_s \texttt{k} ) * q_{t-1}\right) , \bm{\Sigma}_s \big)
\end{multline}

As in the case of Cartesian Non-Linear ARHMM, we need only to explain how to maximize the \emph{emission probability}.
Therefore, in the \emph{Maximization step} we have to maximize the quantity $Q_\text{out}$ in \eqref{eq:aux_out} where the next-state probability is given by \eqref{eq:emission_prob_quat}.
Differently from what we showed in Proposition~\ref{prop:nl_max}, there is not a closed form solution to the maximization problem.
Indeed, by defining $ \gamma_s(t) \defas \Pr (z_t = s| \vY , \Theta ^ \text{old}) $ we have that the function to maximize reads, similarly to \eqref{eq:to_max_out_nl}:
\begin{equation}
    \label{eq:q_out_quat}
    \medmath{
    \begin{multlined}[t]
    \widehat{Q}_\text{out} =
        -\frac12 \sum_{t=0}^{T-1} \gamma_s(t+1) \log \left\vert \bm{\Sigma}_s  \right\vert \\
        -\frac12 \sum_{t=0}^{T-1} \gamma_s(t+1) \vec (q_{t+1} - \mu_{t+1}^{(s)}) ^ \intercal \bm{\Sigma} ^ {-1} \vec(q_{t+1} - \mu_{t+1}^{(s)} )
    \end{multlined}
}
\end{equation}
where $\mu_{t+1}^{(s)}$ is the \emph{expected orientation} at time $t+1$ for hidden mode $s$:
\[ \mu_{t+1}^{(s)} \defas \Exp (a_s \texttt{i} + b_s\texttt{j} + c_s\texttt{k}) * q_t. \]
Similarly to Proposition~\ref{prop:nl_max}, maximization of \eqref{eq:q_out_quat} is achieved by setting
\[ \mSigma_s = \frac{\sum_{t=0}^{T-1} \gamma_s(t+1) \ve_{t+1}^{(s)} \cdot \mathbf{e}_{t+1}^{(s)}{} ^ \intercal }{\sum_{t=0}^{T-1} \gamma_s(t+1)} \]
where $\mathbf{e}_{t+1}^{(s)}$ is the difference between the actual orientation and the predicted one at time $t+1$ assuming that the hidden mode $s$ is active:
\[ \mathbf{e}_{t+1}^{(s)} \defas \vec (q_{t+1} - \mu_{t+1}^{(s)}). \]

On the other hand, maximization over the dynamics parameters can be formulated as
\begin{equation*}
\medmath{
    %\label{eq:tomax_quat}
    \begin{multlined}
        (a_s^\star, b_s^\star, c_s^\star) = \argmax_{(a_s,b_s,c_s)} \\
        \left(-\sum_{t=0}^{T-1} \gamma_s(t+1) \left\Vert \vec \left( q_{t+1} - \Exp \left( a_s\texttt{i} + b_s \texttt{j} + c_s\texttt{k} \right)* q_t  \right)  \right\Vert_{ \bm{\Sigma} ^ {-1} } ^ 2\right)
    \end{multlined}
}
\end{equation*}
where $ \left\Vert \cdot \right\Vert_ \mathbf{A}  $ is the semi-norm induced by a Symmetric Semi-Positive Definite (SPD) matrix $ \mathbf{A} $:
\( \left\Vert \mathbf{x} \right\Vert_ \mathbf{A} \defas \sqrt{ \mathbf{x} ^ \intercal  \mathbf{A} \mathbf{x} }.  \)
Unlikely the Cartesian formulation, this problem cannot be solved in closed form.
Therefore, to solve the Maximization step, we rewrite the maximization problem as an equivalent minimization problem
\begin{equation*}
    \medmath{
        \begin{multlined}
        %\label{eq:tomin_quat}
        (a_s^\star, b_s^\star, c_s^\star) = \argmin_{(a_s,b_s,c_s)} \\
        \left(\sum_{t=0}^{T-1} \gamma_s(t+1) \left\Vert \vec \left( q_{t+1} - \Exp \left( a_s\texttt{i} + b_s \texttt{j} + c_s\texttt{k} \right)* q_t  \right)  \right\Vert_{ \bm{\Sigma} ^ {-1} } ^ 2\right)
        \end{multlined}
    }
\end{equation*}
and solve it via any minimization algorithm (e.g. \emph{gradient descent}).

The implementation (in Python 3.10) of our proposed generalizations is available at \url{https://github.com/mginesi/nl_arhmm}.

\begin{rmk}
    The use of our proposed generalizations can be combined with other extensions to the ARHMM model.
    For instance, in the EM algorithm of \emph{Auto-Regressive Hidden semi Markov Models}, the maximization formula for the emission probabilities can be easily extended in a similar fashion to what we proposed here.
\end{rmk}

\begin{rmk}
    We do not discuss the generalization of the Viterbi algorithm \cite{For73} since its most general formulation works with any latent-mode model with Markov dynamic, independently of the dynamics of the observed state.
\end{rmk}

\section{EXPERIMENTS}
\label{sec:experiments}

In this section, we will compare the results obtained with the NL-ARHMM against the linear ARHMM, showing that our proposed approach results in higher segmentation scores.

\subsection{Validation Test}
\label{sec:val_test}

At first, we show that the EM algorithm for the NL-ARHMM allows learning the parameters of the model.
To do so, we define a NL-ARHMM with fixed parameters and use it to generate data samples.
In our test, we set both the number of hidden modes and the dimension of the continuous state to 2:
\( S = 2, d = 2 . \)
The initial mode distribution is set to
\( \vvarpi = [0.5, 0.5]\transpose \)
and the transition probability matrix is
\[ \mT = \begin{bmatrix}
    0.95 & 0.05 \\
    0.05 & 0.95
\end{bmatrix} . \]
The non-linear dynamics is written as follows.
We start by defining the vector field
\[ \vf
    \left(
        \begin{bmatrix}
            y_1 \\ y_2
        \end{bmatrix}
    \right)
    = \begin{bmatrix}
    y_1{}^3 + y_2{}^2y_1 - y_1 - y_2 \\
    y_2{}^3 + y_1{}^2y_2 + y_1 - y_2
\end{bmatrix} .\]
Then the dynamics of the two hidden modes are obtained using the Euler method with time step $\delta_t = 0.05$ for the dynamics $\vf$ and its opposite $-\vf$, with Gaussian noise with standard deviation $\varsigma = 5\texttt{e}-03$:
\[ %
\begin{aligned}
    \vy_{t+1} | z_{t+1} = 1, \vy_t \sim \calN( \vy_{t+1} | \vy_t + \delta_t \vf(\vy_t) , \varsigma \midentity_2) , \\
    \vy_{t+1} | z_{t+1} = 2, \vy_t \sim \calN( \vy_{t+1} | \vy_t - \delta_t \vf(\vy_t) , \varsigma \midentity_2) . \\
\end{aligned}
\] %
\begin{figure}
    \centering
    \includegraphics[width=0.7\linewidth]{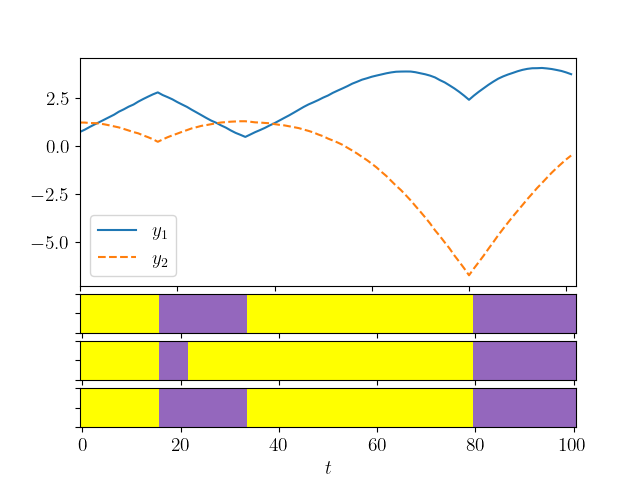}
    \caption{
        Results for the validation test presented in Section~\ref{sec:val_test}.
        The top figure shows the components of the trajectory,
        the second plot shows the ``true'' segmentation (the first mode is represented in purple, while the second mode is represented in yellow),
        the third plot shows the segmentation obtained with the learned ARHMM,
        and the fourth plot shows the segmentation obtained with the learned NL-ARHMM.
        }
        \label{fig:nlarhmm_res}
\end{figure}

\begin{figure*}[t] % results for comparison of different degrees polynomial functions
  \centering
  \subfloat[$d=1$,\label{subfig:poly_comp_1}]{
    \includegraphics[width=0.33\linewidth]{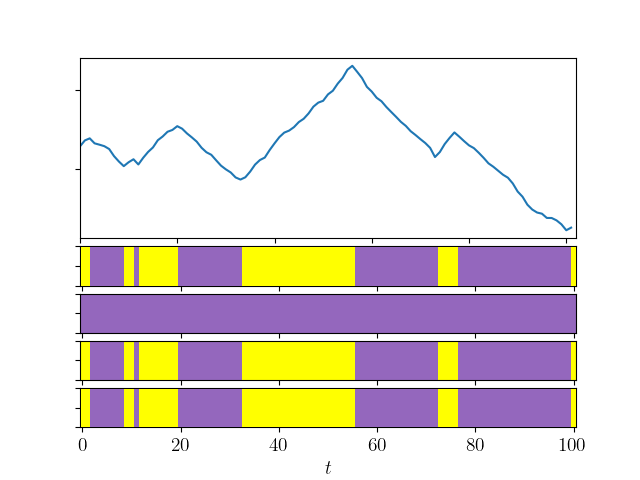}
  }
  \subfloat[$d=2$\label{subfig:poly_comp_2}]{
    \includegraphics[width=0.33\linewidth]{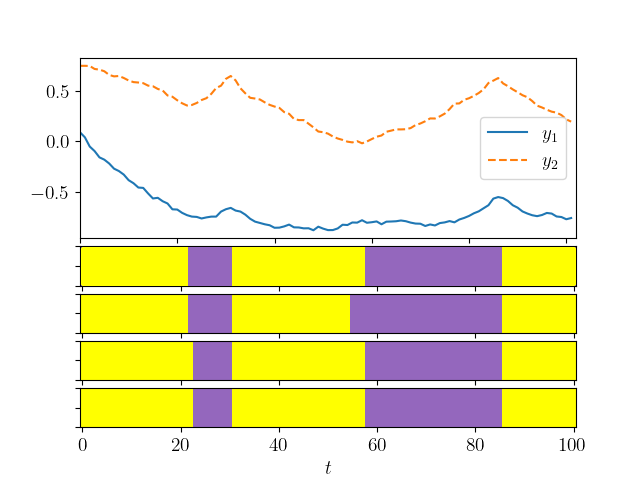}
  }
  \subfloat[$d=3$\label{subfig:poly_comp_3}]{
    \includegraphics[width=0.33\linewidth]{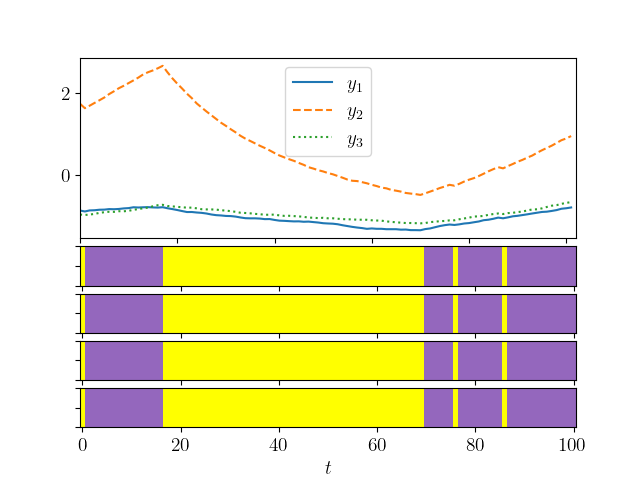}
  }
  \caption{
    Comparison between linear and polynomial ARHMM.
    In all three figures the first plot shows the components $y_i$ of the observed dynamics,
    the second plot shows the true segmentation,
    the third, fourth, and fifth plots shows the segmentations obtained by, respectively, the linear, quadratic, and cubic formulations of the ARHMM (i.e. $k=1,2,3$ in \eqref{eq:poly_bfs_def}).
  }
  \label{fig:poly_comp}
\end{figure*}

\noindent
We use this NL-ARHMM to generate fifty samples with $T = 100$ steps each.
Next, we initialize both a NL-ARHMM (with polynomial basis functions of degree $k=2$) and a linear ARHMM, and apply the EM algorithm to learn the model parameters.
Finally, we use the Viterbi algorithm to segment a new trajectory generated from the ``true'' NL-ARHMM.
Before applying the EM algorithm we standardize the dataset.
That is, we translate it and multiply by a constant to have null mean and unit variance.

\noindent
Figure~\ref{fig:nlarhmm_res} shows one result for these tests.
As can be observed, the NL-ARHMM is able to properly segment the trajectory, being able to capture more complex dynamics.
On the other hand, the linear ARHMM results in a poorer segmentation.
%\noindent
%More tests with this setup confirm that the Non-Linear ARHMM typically gives a more accurate segmentation than the linear ARHMM.

\subsection{Comparison of Different Degrees Polynomial Functions}
\label{sec:poly_test}

In this Section, we propose some tests to show that Non-Linear basis functions are more effective when the observation space $ \mathcal{Y} $ is low dimensional.
On the other hand, for high-dimensional observation space the difference between the two is less noticeable.

To show this aspect we simulate three different ARHMM with non linear dynamics.
Each of them has two hidden modes but differs in the dimension $d$ of the observation space; $d=1,2,3$.
To present a fair comparison, all the six dynamics (two for each value of $d$) are non-polynomial:
in this way none of the model we will compare (linear, quadratic, and cubic) is able to perfectly describe the dynamics.

Simularly to the previous test, for each value of $d$ we generate a total of fifty trajectories of 100 steps.
Figure~\ref{fig:poly_comp} shows the results of these tests.

As it can be observed, with $d=1$ (Figure~\ref{subfig:poly_comp_1}) the linear model is not able to properly describe the non-lienar dynamics, failing in the segmentation of the obtained trajectory.
On the other hand, for $d=2$ and $3$ (Figures~\ref{subfig:poly_comp_2} and \ref{subfig:poly_comp_3} respectively), the differences between the three dynamical model (linear, quadratic, and cubic) significantly reduce.

These tests show that for lower-dimensional observation spaces, the improvements in using the Non-Linear formulation for the ARHMM dynamics are more significative.
We argue that the motivation lies in the fact that for high-dimensional data, linear vector fields can describe a wider range of dynamics, since the number of parameters in the dynamics grows quadratically with the space dimension
(to be precise, the number of parameters for each dynamics is $d^2+d$: $d^2$ elements for the matrix $ \mathbf{A}_s$ and $d$ for the vector $ \mathbf{b}_s $ in \eqref{eq:ar_dyn}).
Oh the other hand, for smaller observation spaces, linear dynamics are too limited and the usage of polynomial functions in the definition of the dynamics provides a broader set of possible behaviors.

For this reason, in Section~\ref{sec:exp_real} we will propose a model in which the Cartesian position of the robot end-effector is modeled by a linear dynamics, while the gripper angle (a 1-dimensional variable) will be modeled using a polynomial dynamics.

\subsection{Experiments on Real Setups}
\label{sec:exp_real}

To prove the effectiveness of our generalizations, we propose a model combining both Non-Linear Cartesian and Linear Unit-Quaternion dynamics.
In particular, we propose a pose+gripper model in which, for each hand of the robot, we model the position with a linear Cartesian model, the end-effector orientation with a linear Unit-Quaternion model, and the gripper angle with a quadratic model.
The decision to use a linear model for the position and a quadratic model for the gripper angle are motivated by the tests presented in Section~\ref{sec:poly_test}: for 3-dimensional spaces, linear, quadratic, and cubic models are almost indistinguishable (Figure~\ref{subfig:poly_comp_3}) and we thus choose the simplest one.
On the other hand, for 1-dimensional trajectories, linear models fail to describe the complexity of the behavior, while a quadratic and a cubic one give similar results (Figure~\ref{subfig:poly_comp_3}).

\noindent
Additionally, we assume that position $ \mathbf{x}^h_t $, orientation $ q^h_t $, and gripper angle $ \vartheta^h_t$ of each arm $h$ are independent of each others when the hidden mode is given:
\begin{multline}
    \label{eq:indep_pose_angle}
    p \left(\substack{ \mathbf{x}^1_{t+1} , q^1_{t+1}, \vartheta ^ 1 _ {t+1} ,\\\\
        \ldots,\\\\
        \mathbf{x}^H_{t+1} , q^H_{t+1}, \vartheta ^ H _ {t+1}}
        \Bigg|
        \substack{\mathbf{x}^1_{t} , q^1_{t}, \vartheta ^ 1 _ {t},\\\\
        \ldots ,\\\\
        \mathbf{x}^H_{t} , q^H_{t}, \vartheta ^ H _ {t} , z_{t+1}}\right)
    = \\
    \prod_{h=1} ^ H p( \mathbf{x}_{t+1}^h | z_{t+1} , \mathbf{x}_t^h) \, p( q_{t+1}^h | z_{t+1} , q_t^h) \, p( \vartheta_{t+1}^h | z_{t+1} , \vartheta_t^h).
\end{multline}
A graphical model representation of the model is given in Figure~\ref{fig:pose_gripper_graph_mod}.

\begin{figure}[t]
    \centering
    \vspace{5pt} %
    \resizebox{1.0\linewidth}{!}{
        \begin{tikzpicture}
    \node[draw, circle, minimum size = 1.17cm] (z-) at (0,0) {$ z_{t-1} $};
    \node[draw, circle, minimum size = 1.17cm] (z) at (5,0) {$ z_{t} $};
    \node[draw, circle, minimum size = 1.17cm] (z+) at (10,0) {$ z_{t+1} $};

    \node[draw, circle, minimum size = 1.17cm, fill = black, fill opacity = 0.2, text opacity = 1](x1-) at (00 + 1.5 + 1.0,-2.0) {$ \vx_{t-1}^1 $};
    \node[draw, circle, minimum size = 1.17cm, fill = black, fill opacity = 0.2, text opacity = 1] (x1) at (05 + 1.5 + 1.0,-2.0) {$ \vx_{t}^1 $};
    \node[draw, circle, minimum size = 1.17cm, fill = black, fill opacity = 0.2, text opacity = 1](x1+) at (10 + 1.5 + 1.0,-2.0) {$ \vx_{t+1}^1 $};
    \node[draw, circle, minimum size = 1.17cm, fill = black, fill opacity = 0.2, text opacity = 1](q1-) at (00 + 1.5 + 0.5,-3.5) {$ q_{t-1}^1 $};
    \node[draw, circle, minimum size = 1.17cm, fill = black, fill opacity = 0.2, text opacity = 1] (q1) at (05 + 1.5 + 0.5,-3.5) {$ q_{t}^1 $};
    \node[draw, circle, minimum size = 1.17cm, fill = black, fill opacity = 0.2, text opacity = 1](q1+) at (10 + 1.5 + 0.5,-3.5) {$ q_{t+1}^1 $};
    \node[draw, circle, minimum size = 1.17cm, fill = black, fill opacity = 0.2, text opacity = 1](t1-) at (00 + 1.5 + 0.0,-5.0) {$ \vartheta_{t-1}^1 $};
    \node[draw, circle, minimum size = 1.17cm, fill = black, fill opacity = 0.2, text opacity = 1] (t1) at (05 + 1.5 + 0.0,-5.0) {$ \vartheta_{t}^1 $};
    \node[draw, circle, minimum size = 1.17cm, fill = black, fill opacity = 0.2, text opacity = 1](t1+) at (10 + 1.5 + 0.0,-5.0) {$ \vartheta_{t+1}^1 $};

    \node[] at (00 + 2.0,-6.0 + 0.1) { \Huge$ \vdots $};
    \node[] at (05 + 2.0,-6.0 + 0.1) { \Huge$ \vdots $};
    \node[] at (10 + 2.0,-6.0 + 0.1) { \Huge$ \vdots $};

    \node[draw, circle, minimum size = 1.17cm, fill = black, fill opacity = 0.2, text opacity = 1](xH-) at (00 + 1.5 + 1.0,-2.0-5) {$ \vx_{t-1}^H $};
    \node[draw, circle, minimum size = 1.17cm, fill = black, fill opacity = 0.2, text opacity = 1](xH)  at (05 + 1.5 + 1.0,-2.0-5) {$ \vx_{t}^H $};
    \node[draw, circle, minimum size = 1.17cm, fill = black, fill opacity = 0.2, text opacity = 1](xH+) at (10 + 1.5 + 1.0,-2.0-5) {$ \vx_{t+1}^H $};
    \node[draw, circle, minimum size = 1.17cm, fill = black, fill opacity = 0.2, text opacity = 1](qH-) at (00 + 1.5 + 0.5,-3.5-5) {$ q_{t-1}^H $};
    \node[draw, circle, minimum size = 1.17cm, fill = black, fill opacity = 0.2, text opacity = 1](qH)  at (05 + 1.5 + 0.5,-3.5-5) {$ q_{t}^H $};
    \node[draw, circle, minimum size = 1.17cm, fill = black, fill opacity = 0.2, text opacity = 1](qH+) at (10 + 1.5 + 0.5,-3.5-5) {$ q_{t+1}^H $};
    \node[draw, circle, minimum size = 1.17cm, fill = black, fill opacity = 0.2, text opacity = 1](tH-) at (00 + 1.5 + 0.0,-5.0-5) {$ \vartheta_{t-1}^H $};
    \node[draw, circle, minimum size = 1.17cm, fill = black, fill opacity = 0.2, text opacity = 1](tH)  at (05 + 1.5 + 0.0,-5.0-5) {$ \vartheta_{t}^H $};
    \node[draw, circle, minimum size = 1.17cm, fill = black, fill opacity = 0.2, text opacity = 1](tH+) at (10 + 1.5 + 0.0,-5.0-5) {$ \vartheta_{t+1}^H $};

    \draw[thick, -{Latex}] (z-) -- (z);
    \draw[thick, -{Latex}] (z) -- (z+);
    \draw[thick, -{Latex}] (x1-) -- (x1);
    \draw[thick, -{Latex}] (x1)  -- (x1+);
    \draw[-{Latex}, rounded corners = 9pt] (z-.320) |- (x1-.160);
    \draw[-{Latex}, rounded corners = 9pt]  (z.320) |-  (x1.160);
    \draw[-{Latex}, rounded corners = 9pt] (z+.320) |- (x1+.160);
    \draw[-{Latex}, rounded corners = 9pt] (z-.300) |- (q1-.160);
    \draw[-{Latex}, rounded corners = 9pt]  (z.300) |-  (q1.160);
    \draw[-{Latex}, rounded corners = 9pt] (z+.300) |- (q1+.160);
    \draw[-{Latex}, rounded corners = 9pt] (z-.280) |- (t1-.160);
    \draw[-{Latex}, rounded corners = 9pt]  (z.280) |-  (t1.160);
    \draw[-{Latex}, rounded corners = 9pt] (z+.280) |- (t1+.160);

    \draw[-{Latex}, rounded corners = 9pt] (z-.260) |- (xH-.160);
    \draw[-{Latex}, rounded corners = 9pt]  (z.260) |-  (xH.160);
    \draw[-{Latex}, rounded corners = 9pt] (z+.260) |- (xH+.160);
    \draw[-{Latex}, rounded corners = 9pt] (z-.240) |- (qH-.160);
    \draw[-{Latex}, rounded corners = 9pt]  (z.240) |-  (qH.160);
    \draw[-{Latex}, rounded corners = 9pt] (z+.240) |- (qH+.160);
    \draw[-{Latex}, rounded corners = 9pt] (z-.220) |- (tH-.160);
    \draw[-{Latex}, rounded corners = 9pt]  (z.220) |-  (tH.160);
    \draw[-{Latex}, rounded corners = 9pt] (z+.220) |- (tH+.160);

    \draw[thick, dotted, -{Latex}] (z+)  --++ (3.5+0.25,0);
    \draw[thick, dotted, -{Latex}] (x1+) --++ (1.0+0.25,0);
    \draw[thick, dotted, -{Latex}] (q1+) --++ (1.5+0.25,0);
    \draw[thick, dotted, -{Latex}] (t1+) --++ (2.0+0.25,0);
    \draw[thick, dotted, -{Latex}] (xH+) --++ (1.0+0.25,0);
    \draw[thick, dotted, -{Latex}] (qH+) --++ (1.5+0.25,0);
    \draw[thick, dotted, -{Latex}] (tH+) --++ (2.0+0.25,0);

    \draw[thick, dotted, {Latex}-] (z-)  --++ (-1.5,0);
    \draw[thick, dotted, {Latex}-] (x1-) --++ (-4.0,0);
    \draw[thick, dotted, {Latex}-] (q1-) --++ (-3.5,0);
    \draw[thick, dotted, {Latex}-] (t1-) --++ (-3.0,0);
    \draw[thick, dotted, {Latex}-] (xH-) --++ (-4.0,0);
    \draw[thick, dotted, {Latex}-] (qH-) --++ (-3.5,0);
    \draw[thick, dotted, {Latex}-] (tH-) --++ (-3.0,0);

    \draw[thick, -{Latex}] (x1-) --(x1);
    \draw[thick, -{Latex}] (q1-) --(q1);
    \draw[thick, -{Latex}] (t1-) --(t1);
    \draw[thick, -{Latex}] (xH-) --(xH);
    \draw[thick, -{Latex}] (qH-) --(qH);
    \draw[thick, -{Latex}] (tH-) --(tH);

    \draw[thick, -{Latex}] (x1) --(x1+);
    \draw[thick, -{Latex}] (q1) --(q1+);
    \draw[thick, -{Latex}] (t1) --(t1+);
    \draw[thick, -{Latex}] (xH) --(xH+);
    \draw[thick, -{Latex}] (qH) --(qH+);
    \draw[thick, -{Latex}] (tH) --(tH+);

    % LABELS
    \node[align=center] () at (-3.0, +0.0) {latent\\variable};
    \node () at (-3.0, -3.5) {1-st e.e.};
    \node () at (-3.0, -6.0) {$\Huge \vdots$};
    \node () at (-3.0, -8.5) {$H$-th e.e.};
    \draw [decorate, decoration={brace, mirror, amplitude=10pt}] (-1.65, -2) --++ (0, -3);
    \draw [decorate, decoration={brace, mirror, amplitude=10pt}] (-1.65, -2-5) --++ (0, -3);
\end{tikzpicture}
    }
    \caption{Graphical representation of a Pose+Gripper ARHMM.}
    \label{fig:pose_gripper_graph_mod}
\end{figure}
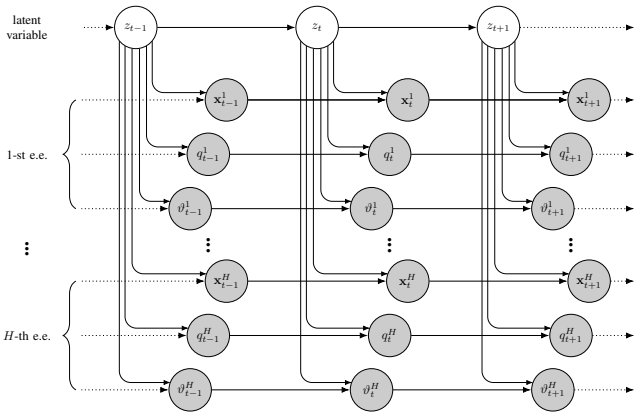

To demonstrate the improvement in the segmentation quality given by the different dynamics, we compare our model against a Linear ARHMM with the same topology, so that the only difference lies in the definition of the Auto Regressive dynamics itself.
This means that position, orientation, and gripper angle for each end effector is independent of each other, and the conditional probability of the next observed state follows the same law as in \eqref{eq:indep_pose_angle}.
The only difference is that the Linear ARHMM uses linear dynamics for the evolution of each observed state.
Since the conditional independence (and, thus, the topology) are identical, the graphical representation of the model does not change.

To test our method, we use the \emph{JIGSAW} dataset \cite{GVRAVLTZKH14}.
It consists of three different surgical tasks (`SUTuring', `Knot Tying', and `Needle Passing') executed by eight surgeons of different skill levels.
Data consists of positions, velocities, orientations (in the form of rotation matrices), angular velocities, and gripper angle.
Each variable is given for both left and right arms, and for both the patient-side and surgeon-side controllers.
Moreover, JIGSAW provides a segmentation of all the tasks in gestures.
We use only positions, orientations (converting from rotation matrices to unit quaternions) and gripper angle for the patient-side end-effectors.
Thus, in our case, $H=2$.

To evaluate the quality of the algorithm, we proceed as follows.
For a given batch of data, we use the Expectation Maximization algorithm to infer the set of parameters $ \Theta $ for both models.
Then, we apply the Viterbi algorithm to segment one of the executions that were not in the training set, and we compute different scores for segmentation.
This test is repeated a given number of times randomizing each time both the training and testing sets.

\noindent
The scores we decide to adopt are the following: \emph{Seg-score} and \emph{Silhouette Index} (SI).
Seg-score (also known as \emph{Jaccard index}) \cite{IH98, MUSMMEUEVB14} is a \emph{supervised score}, that is, it compares the obtained segmentation to the ground-truth;
it is the size of similarity between segmentation result and ground-truth.
SI \cite{Rou87}, instead, is an \emph{unsupervised score} and it evaluates the goodness of the segmentation by treating it as a clustering problem;
it evaluates the clusters by comparing the average distance within a cluster with the average distance to the points in the nearest clusters.
We decided to use these scores since they are already used for the evaluation of segmentation algorithms in the context of robotic surgery  \cite{FACE16, MGKPADG16}.

The main challenge in using this dataset is that it is not consistent: the same task may contains different gestures in each trial (that is, a particular gesture is present in one trial but not in the others).
This makes the segmentation a particularly hard problem: if the model has many hidden modes, but some are rare, over-fitting and over-segmentation (i.e. a segment is wrongly split in multiple smaller parts) will likely happen.
On the other hand, if we use fewer hidden modes, some different gestures will be described by the same hidden mode, resulting in a lower Seg-score.

\begin{figure}[t]
    \centering
    \subfloat[Knot Tying - 6 modes, Seg-score.]
        {\includegraphics[width = 0.45\linewidth]{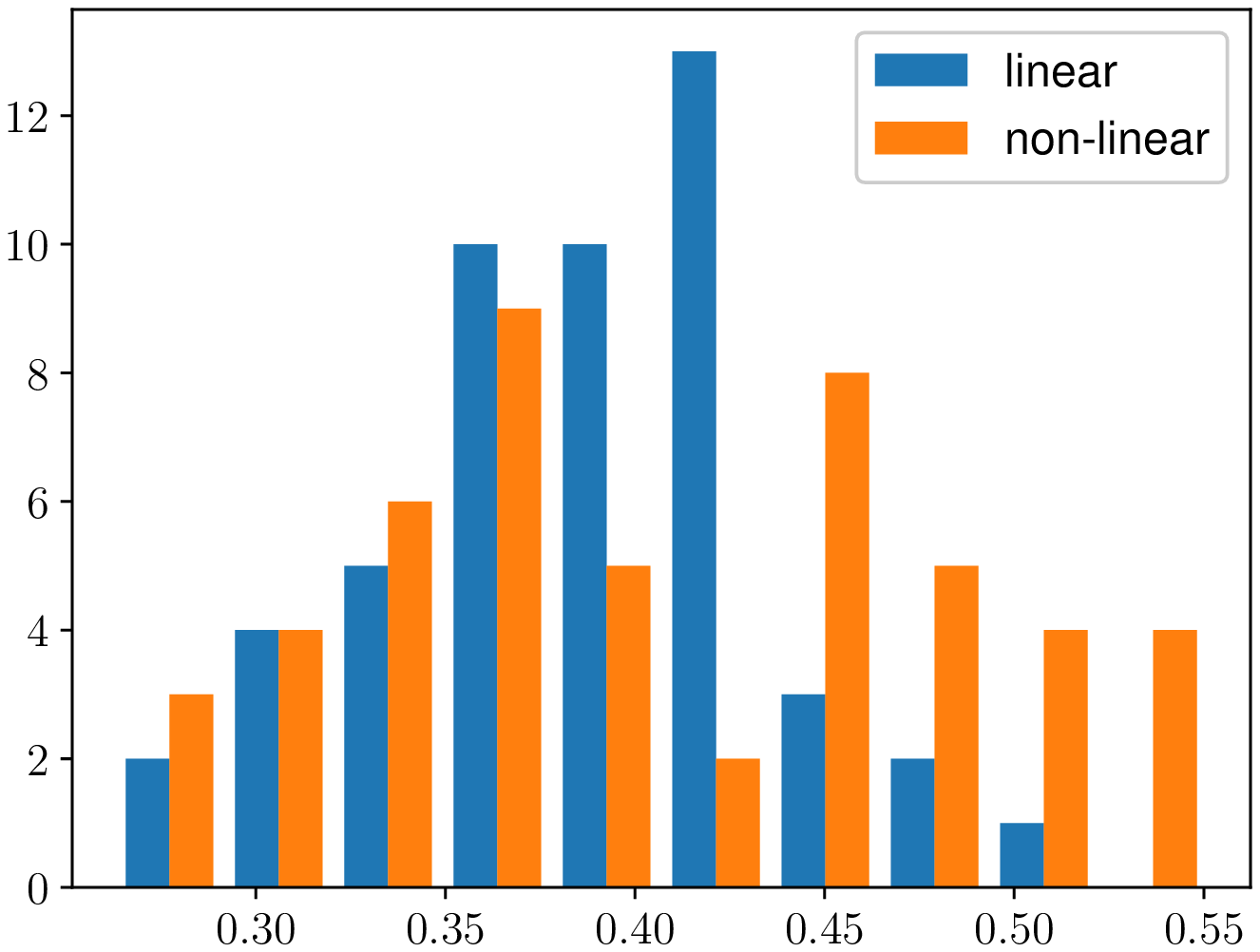}}
    \hfill
    \subfloat[Knot Tying - 6 modes, SI.]
        {\includegraphics[width = 0.45\linewidth]{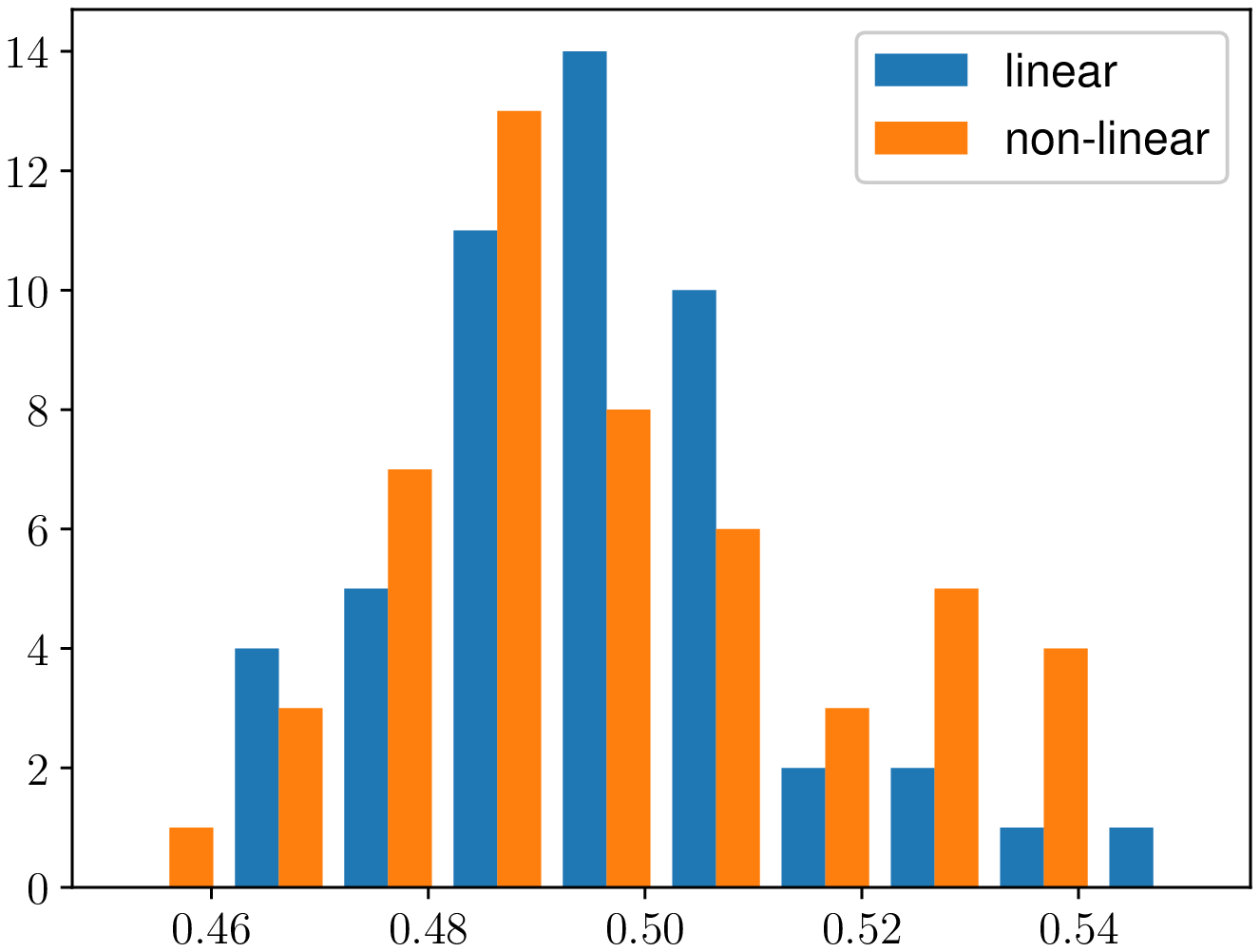}}
    \\
    \subfloat[Suturing - 5 modes, Seg-score.]
        {\includegraphics[width = 0.45\linewidth]{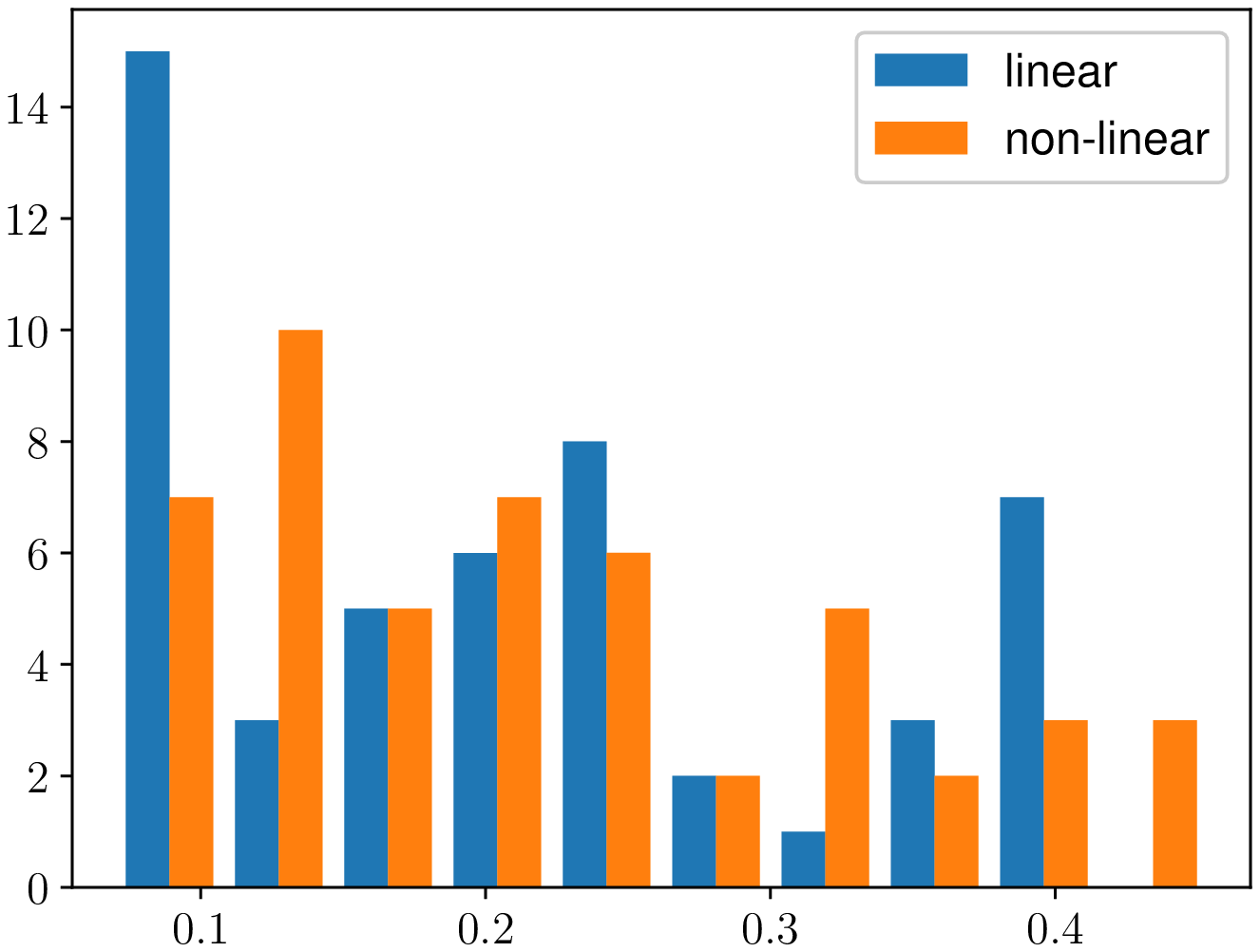}}
    \hfill
    \subfloat[Suturing - 5 modes, SI.\label{subfig:sut_5_si}]
        {\includegraphics[width = 0.45\linewidth]{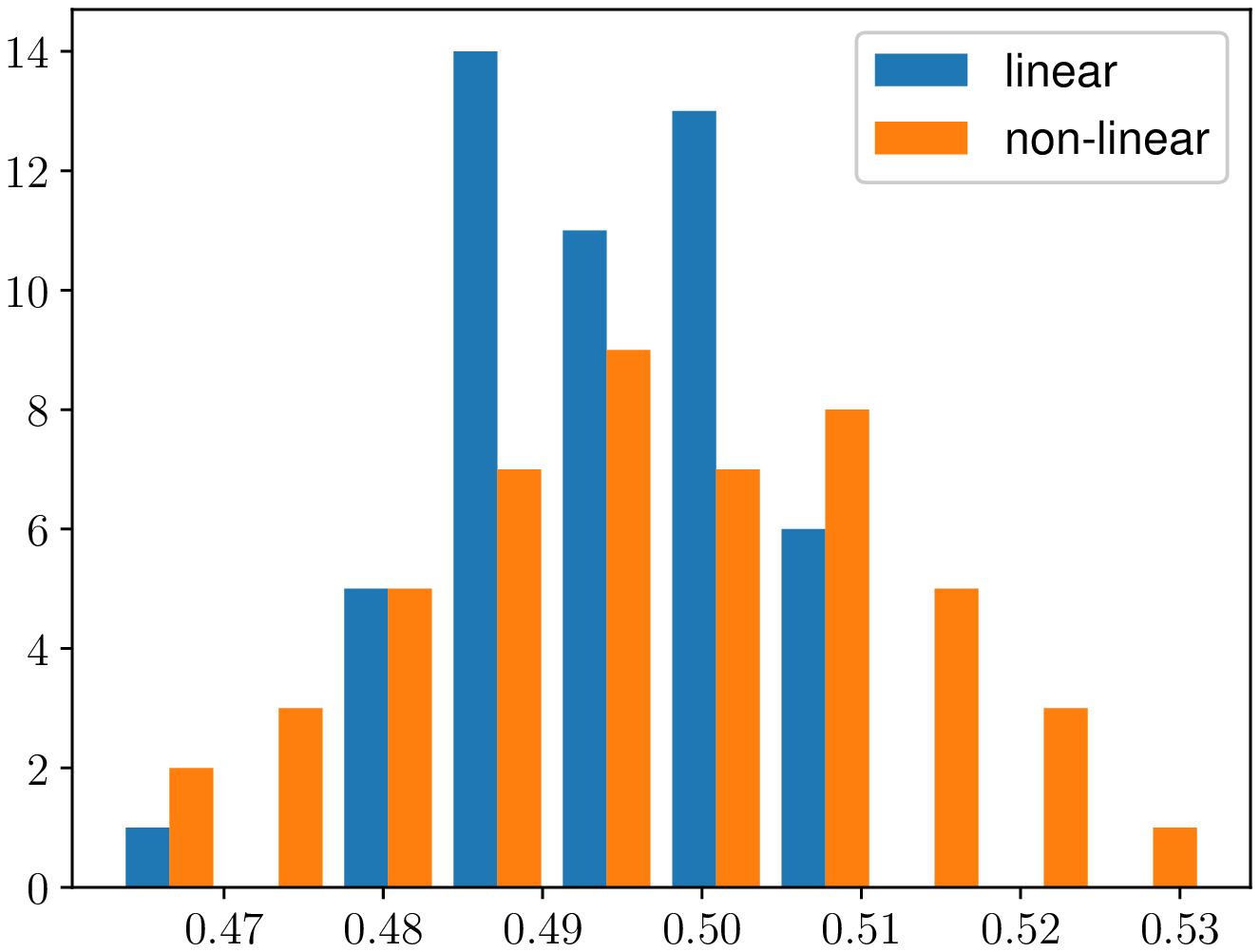}}
    \caption{
        Histograms showing the number of occurrences for different scores.
        Color blue marks the `state of the art' linear ARHMM, while orange marks our proposed generalization.
    }
    \label{fig:results_jigsaw}
\end{figure}

In Figure~\ref{fig:results_jigsaw} we show the results of this test.
In particular, for each of 100 trials with a given task and a given number of hidden modes, we show the histograms counting the number of occurrences of the scores in a given interval.
As it can be seen, the NL model is able to achieve, on average, a higher score in both the supervised and unsupervised metric, and it is also able to achieve higher scores (see, for instance, Figure~\ref{subfig:sut_5_si}).

\section{CONCLUSION AND FUTURE WORKS}
\label{sec:conclusion}

% NON LO FAMO MA LO DIMO

In this work, we proposed a generalization to the Auto-Regressive Hidden Markov Model via modifications of the Auto-Regressive dynamics.
In particular, we proposed a Non-Linear dynamics in Cartesian space and a linear dynamics in Unit-Quaternion space.

\noindent
Experiments on real datasets show that adopting these new dynamics result in an improvement in segmentation scores.
This has been proved by comparing two topologically identical models in which the only difference is the formulation of the vector fields governing the evolution of the observed state.

As future work, we aim to further generalize the observed state dynamics to dynamical systems used in trajectory learning for robotics such as \emph{Dynamic Movement Primitives} \cite{INS02, GSF21} and \emph{Probabilistic Movement Primitives} \cite{PDPN13}.
This would allow to simultaneously segment a robot trajectory and extract the robot movements used to generate the trajectory.

\noindent
Moreover, we aim at generalizing ARHMMs to deal with dynamics in generic Riemannian Manifolds and non-Euclidean spaces, possibly extending our proposed idea for Unit-Quaternions to the space $SO(3)$ of Rotation Matrices and the space of Symmetric Positive Definite (SPD) matrices, both heavily used in robotics.

\bibliographystyle{IEEEtran}
\bibliography{nl_arhmm_biblio.bib}

% Generated by IEEEtran.bst, version: 1.14 (2015/08/26)
\begin{thebibliography}{10}
\providecommand{\url}[1]{#1}
\csname url@samestyle\endcsname
\providecommand{\newblock}{\relax}
\providecommand{\bibinfo}[2]{#2}
\providecommand{\BIBentrySTDinterwordspacing}{\spaceskip=0pt\relax}
\providecommand{\BIBentryALTinterwordstretchfactor}{4}
\providecommand{\BIBentryALTinterwordspacing}{\spaceskip=\fontdimen2\font plus
\BIBentryALTinterwordstretchfactor\fontdimen3\font minus
  \fontdimen4\font\relax}
\providecommand{\BIBforeignlanguage}[2]{{%
\expandafter\ifx\csname l@#1\endcsname\relax
\typeout{** WARNING: IEEEtran.bst: No hyphenation pattern has been}%
\typeout{** loaded for the language `#1'. Using the pattern for}%
\typeout{** the default language instead.}%
\else
\language=\csname l@#1\endcsname
\fi
#2}}
\providecommand{\BIBdecl}{\relax}
\BIBdecl

\bibitem{CDSCB10}
S.~Calinon, F.~D'halluin, E.~L. Sauser, D.~G. Caldwell, and A.~G. Billard,
  ``Learning and reproduction of gestures by imitation,'' \emph{IEEE Robotics
  Automation Magazine}, vol.~17, no.~2, pp. 44--54, June 2010.

\bibitem{Vis11}
I.~Visser, ``Seven things to remember about hidden markov models: A tutorial on
  markovian models for time series,'' \emph{Journal of Mathematical
  Psychology}, vol.~55, no.~6, pp. 403--415, 2011.

\bibitem{VMIJ12}
A.~Vakanski, I.~Mantegh, A.~Irish, and F.~Janabi-Sharifi, ``Trajectory learning
  for robot programming by demonstration using hidden markov model and dynamic
  time warping,'' \emph{IEEE Transactions on Systems, Man, and Cybernetics,
  Part B (Cybernetics)}, vol.~42, no.~4, pp. 1039--1052, 2012.

\bibitem{JR85}
B.-H. Juang and L.~Rabiner, ``Mixture autoregressive hidden markov models for
  speech signals,'' \emph{IEEE Transactions on Acoustics, Speech, and Signal
  Processing}, vol.~33, no.~6, pp. 1404--1413, 1985.

\bibitem{Jel97}
F.~Jelinek, \emph{Statistical methods for speech recognition}.\hskip 1em plus
  0.5em minus 0.4em\relax MIT press, 1997.

\bibitem{NF86}
R.~Nag, K.~Wong, and F.~Fallside, ``Script recognition using hidden markov
  models,'' in \emph{ICASSP'86. IEEE International Conference on Acoustics,
  Speech, and Signal Processing}, vol.~11.\hskip 1em plus 0.5em minus
  0.4em\relax IEEE, 1986, pp. 2071--2074.

\bibitem{MS99a}
C.~Manning and H.~Schutze, \emph{Foundations of statistical natural language
  processing}.\hskip 1em plus 0.5em minus 0.4em\relax MIT press, 1999.

\bibitem{RLVVKYH08}
C.~E. Reiley, H.~C. Lin, B.~Varadarajan, B.~Vagvolgyi, S.~Khudanpur, D.~D. Yuh,
  and G.~D. Hager, ``Automatic recognition of surgical motions using
  statistical modeling for capturing variability,'' in \emph{MMVR}, vol.
  132.\hskip 1em plus 0.5em minus 0.4em\relax Citeseer, 2008, pp. 396--401.

\bibitem{VRLKH09}
B.~Varadarajan, C.~Reiley, H.~Lin, S.~Khudanpur, and G.~Hager, ``Data-derived
  models for segmentation with application to surgical assessment and
  training,'' in \emph{International Conference on Medical Image Computing and
  Computer-Assisted Intervention}.\hskip 1em plus 0.5em minus 0.4em\relax
  Springer, 2009, pp. 426--434.

\bibitem{TEKHV12}
L.~Tao, E.~Elhamifar, S.~Khudanpur, G.~D. Hager, and R.~Vidal, ``Sparse hidden
  markov models for surgical gesture classification and skill evaluation,'' in
  \emph{International conference on information processing in computer-assisted
  interventions}.\hskip 1em plus 0.5em minus 0.4em\relax Springer, 2012, pp.
  167--177.

\bibitem{EMJ89}
Y.~Ephraim, D.~Malah, and B.-H. Juang, ``On the application of hidden markov
  models for enhancing noisy speech,'' \emph{IEEE Transactions on Acoustics,
  Speech, and Signal Processing}, vol.~37, no.~12, pp. 1846--1856, 1989.

\bibitem{Rab89}
L.~R. Rabiner, \emph{A Tutorial on Hidden Markov Models and Selected
  Applications in Speech Recognition}.\hskip 1em plus 0.5em minus 0.4em\relax
  San Francisco, CA, USA: Morgan Kaufmann Publishers Inc., 1990.

\bibitem{Mur98}
K.~P. Murphy, ``Switching kalman filters,'' 1998.

\bibitem{ER05}
Y.~Ephraim and W.~J. Roberts, ``Revisiting autoregressive hidden markov
  modeling of speech signals,'' \emph{IEEE Signal processing letters}, vol.~12,
  no.~2, pp. 166--169, 2005.

\bibitem{KVNP14}
O.~Kroemer, H.~Van~Hoof, G.~Neumann, and J.~Peters, ``Learning to predict
  phases of manipulation tasks as hidden states,'' in \emph{Robotics and
  Automation (ICRA), 2014 IEEE International Conference on}.\hskip 1em plus
  0.5em minus 0.4em\relax IEEE, 2014, pp. 4009--4014.

\bibitem{GRW16}
X.~Guan, R.~Raich, and W.-K. Wong, ``Efficient multi-instance learning for
  activity recognition from time series data using an auto-regressive hidden
  markov model,'' in \emph{International Conference on Machine Learning}, 2016,
  pp. 2330--2339.

\bibitem{KDNVP15}
O.~Kroemer, C.~Daniel, G.~Neumann, H.~Van~Hoof, and J.~Peters, ``Towards
  learning hierarchical skills for multi-phase manipulation tasks,'' in
  \emph{2015 IEEE International Conference on Robotics and Automation
  (ICRA)}.\hskip 1em plus 0.5em minus 0.4em\relax IEEE, 2015, pp. 1503--1510.

\bibitem{CP10}
S.~Chiappa and J.~R. Peters, ``Movement extraction by detecting dynamics
  switches and repetitions,'' in \emph{Advances in neural information
  processing systems}, 2010, pp. 388--396.

\bibitem{Bis06}
C.~M. Bishop, \emph{Pattern Recognition and Machine Learning (Information
  Science and Statistics)}.\hskip 1em plus 0.5em minus 0.4em\relax Secaucus,
  NJ, USA: Springer-Verlag New York, Inc., 2006.

\bibitem{INS03}
A.~J. Ijspeert, J.~Nakanishi, and S.~Schaal, ``Learning attractor landscapes
  for learning motor primitives,'' in \emph{Advances in neural information
  processing systems}, 2003, pp. 1547--1554.

\bibitem{For73}
G.~D. Forney, ``The viterbi algorithm,'' \emph{Proceedings of the IEEE},
  vol.~61, no.~3, pp. 268--278, 1973.

\bibitem{GVRAVLTZKH14}
Y.~Gao, S.~S. Vedula, C.~E. Reiley, N.~Ahmidi, B.~Varadarajan, H.~C. Lin,
  L.~Tao, L.~Zappella, B.~B{\'e}jar, D.~D. Yuh \emph{et~al.}, ``Jhu-isi gesture
  and skill assessment working set ({JIGSAWS}): A surgical activity dataset for
  human motion modeling,'' in \emph{MICCAI Workshop: M2CAI}, vol.~3, 2014,
  p.~3.

\bibitem{IH98}
G.~Ivchenko and S.~Honov, ``On the {J}accard similarity test,'' \emph{Journal
  of Mathematical Sciences}, vol.~88, pp. 789--794, 1998.

\bibitem{MUSMMEUEVB14}
M.~Ma{\v{s}}ka, V.~Ulman, D.~Svoboda, P.~Matula, P.~Matula, C.~Ederra,
  A.~Urbiola, T.~Espa{\~n}a, S.~Venkatesan, D.~M. Balak \emph{et~al.}, ``A
  benchmark for comparison of cell tracking algorithms,''
  \emph{Bioinformatics}, vol.~30, no.~11, pp. 1609--1617, 2014.

\bibitem{Rou87}
P.~J. Rousseeuw, ``Silhouettes: a graphical aid to the interpretation and
  validation of cluster analysis,'' \emph{Journal of computational and applied
  mathematics}, vol.~20, pp. 53--65, 1987.

\bibitem{FACE16}
M.~Fard, S.~Ameri, R.~Chinnam, and R.~Ellis, ``Soft boundary approach for
  unsupervised gesture segmentation in robotic-assisted surgery,'' \emph{IEEE
  Robotics and Automation Letters}, vol.~2, no.~1, pp. 171--178, 2016.

\bibitem{MGKPADG16}
A.~Murali, A.~Garg, S.~Krishnan, F.~Pokorny, P.~Abbeel, T.~Darrell, and
  K.~Goldberg, ``{TSC-DL}:{U}nsupervised trajectory segmentation of multi-modal
  surgical demonstrations with deep learning,'' \emph{2016 IEEE International
  Conference on Robotics and Automation (ICRA)}, 2016.

\bibitem{INS02}
A.~J. Ijspeert, J.~Nakanishi, and S.~Schaal, ``Movement imitation with
  nonlinear dynamical systems in humanoid robots,'' in \emph{Robotics and
  Automation, 2002. Proceedings. ICRA'02. IEEE International Conference on},
  vol.~2.\hskip 1em plus 0.5em minus 0.4em\relax IEEE, 2002, pp. 1398--1403.

\bibitem{GSF21}
M.~Ginesi, N.~Sansonetto, and P.~Fiorini, ``Overcoming some drawbacks of
  dynamic movement primitives,'' \emph{Robotics and Autonomous Systems}, vol.
  144, p. 103844, 2021.

\bibitem{PDPN13}
A.~Paraschos, C.~Daniel, J.~R. Peters, and G.~Neumann, ``Probabilistic movement
  primitives,'' in \emph{Advances in neural information processing systems},
  2013, pp. 2616--2624.

\end{thebibliography}

\end{document}